\newtheorem{theorem}{Theorem}
\newtheorem{prop}{Proposition}
\theoremstyle{definition}
\newtheorem{definition}{Definition}
\newtheorem{example}{Example}
\title{
Hierarchical Reinforcement Learning for Deep Goal Reasoning:\\ An Expressiveness Analysis
}
\author{%
Weihang Yuan\and
Héctor Muñoz-Avila\\
\affiliations
Department of Computer Science and Engineering\\
Lehigh University\\
\emails
\{wey218, hem4\}@lehigh.edu
}
\begin{document}

\maketitle

\begin{abstract}

Hierarchical DQN (h-DQN) is a two-level architecture of feedforward neural networks where the meta level selects goals and the lower level takes actions to achieve the goals. We show tasks that cannot be solved by h-DQN, exemplifying the limitation of this type of hierarchical framework (HF). We describe the recurrent hierarchical framework (RHF), generalizing architectures that use a recurrent neural network at the meta level. We analyze the expressiveness of HF and RHF using context-sensitive grammars. We show that RHF is more expressive than HF. We perform experiments comparing an implementation of RHF with two HF baselines; the results corroborate our theoretical findings.

\end{abstract}

\section{Introduction}


Hierarchical DQN (h-DQN) \cite{kulkarni2016hierarchical} is a reinforcement learning (RL) system that combines deep RL and goal reasoning. h-DQN uses a two-level hierarchical architecture: a meta controller at the top level and a controller at the lower level. The meta controller and the controller operate at different temporal scales.  The meta controller receives a state $s$ and selects a goal $g$, which can be achieved in some state. The controller receives the current state and $g$; it takes actions until $g$ is achieved in a state $s'$. Then the meta controller receives $s'$ and selects the next goal $g'$; the process repeats. Both levels of h-DQN use deep Q-networks (DQN) \cite{mnih2015human} to jointly learn value functions; the meta controller tries to maximize the cumulative reward from the environment whereas the controller is motivated by user-defined intrinsic rewards. 


Goal reasoning is the study of agents that are capable of reasoning about their goals and changing them when the need arises \cite{aha2018goal,munoz2019everything}. h-DQN selects its goals and plans toward achieving them while adapting the goal selection process using RL. Thus h-DQN can be considered a goal reasoning agent.

To exemplify the capability and limitation of h-DQN, consider a one-dimensional corridor environment shown in Figure~\ref{fig:corridor} (top), which is based on the environments in \cite{osband2016deep,kulkarni2016hierarchical}. The corridor consists of 7 states from left to right: $s_0, \dots, s_6$. An agent starts in $s_3$ (circled) and can move left or right. $s_0$ is the terminal state (gray). In task (i), the agent receives a reward of $+1$ if it visits $s_6$ (asterisk) at least once and then reaches $s_0$; otherwise, the reward is $+0.01$. Figure~\ref{fig:corridor}~(a) shows a trajectory that results in a reward of $+1$ in task (i). h-DQN is able to generate this trajectory and obtain the maximum reward in task (i). Now consider task (ii): the agent must visit $s_6$ at least twice to receive a reward of $+1$. We will show that it is impossible for h-DQN to deterministically generate a trajectory (e.g., Figure~\ref{fig:corridor}~(b)) that solves task (ii).

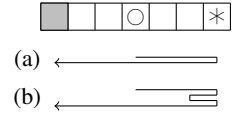
\begin{figure}
    \centering
    \begin{tikzpicture}[scale=0.36]
    \foreach \i in {0, ..., 7} {
        \draw (\i, 0) -- (\i, 1);
    }
    \foreach \i in {0, 1} {
        \draw  (0, \i) -- (7, \i);
    }
    \draw [fill=lightgray] (0, 0) rectangle (1, 1);
    \draw node at (3.5,0.5) {\fontsize{9}{9}\Circle};
    \draw node at (6.5,0.5) {\fontsize{9}{9}\varhexstar};
    \draw  (3.5, -1) -- (6.5, -1);
    \draw  (6.5, -1) -- (6.5, -1.2);
    \draw[->]  (6.5, -1.2) -- (0.5, -1.2);
    \draw node at (-0.5, -1.1) {\small(a)};
    \draw  (3.5, -2.2) -- (6.5, -2.2);
    \draw  (6.5, -2.2) -- (6.5, -2.4);
    \draw  (6.5, -2.4) -- (5.5, -2.4);
    \draw  (5.5, -2.4) -- (5.5, -2.6);
    \draw  (5.5, -2.6) -- (6.5, -2.6);
    \draw  (6.5, -2.6) -- (6.5, -2.8);
    \draw[->]  (6.5, -2.8) -- (0.5, -2.8);
    \draw node at (-0.5, -2.5) {\small(b)};
\end{tikzpicture}
    \caption{The corridor environment (top). The starting state is circled. The terminal state is gray. In corridor task (ii), trajectory (a) results in a reward of $+0.01$; trajectory (b) results in a reward of $+1$.}
    \label{fig:corridor}
\end{figure}

h-DQN can be considered an instance of the hierarchical framework (HF): a two-level architecture of feedforward neural networks that can use any RL algorithms, not just Q-learning. HF can use different algorithms for the meta controller and the controller, whereas h-DQN uses the same algorithm at each level.

Consider using a recurrent neural network (RNN) in place of the feedforward network in the HF meta controller. We refer to this type of architecture as the recurrent hierarchical framework (RHF). A related instance is feudal networks (FuNs) \cite{vezhnevets2017feudal}, which uses RNNs at both levels. To investigate the effects of a recurrent meta controller, we focus on the case where HF and RHF use the same controller, isolating the difference in their meta controllers.

In the remainder of this paper, we describe HF and RHF, and construct two types of context-sensitive grammars that capture the expressiveness of HF and RHF respectively. Using the grammars, we show that RHF is more expressive than HF: (1) any state-goal trajectories generated by an HF system can be generated by an RHF system; (2) some state-goal trajectories can only be generated by RHF but not HF. We present an implementation of RHF. We perform experiments comparing this implementation with two HF baselines; the results corroborate our expressiveness analysis of HF and RHF.

\section{Related Work}

The options framework \cite{sutton1999between} defines an option as a temporally extended course of actions consisting of three components: an initiation set $I$, a policy $\pi$, and a termination condition $\beta$. An agent in a state $s$ can select an option $o$ if and only if $s\in I$. If $o$ is selected, the agent takes actions according to $\pi$ until $o$ terminates according to $\beta$.

DQN \cite{mnih2015human} combines a semi-gradient Q-learning algorithm \cite{watkins1992q} and a deep convolutional neural network (CNN) \cite{lecun1998gradient} to learn to play Atari 2600 games. To improve stability, DQN uses experience replay  and maintains both a Q-network and a target network. The Q-network provides the behavior policy and updates online. The target network, which is used to compute the temporal difference error, updates periodically and remains fixed between updates. DQN plays at human level or above in 29 of the games, but performs poorly in Montezuma's Revenge. This game requires strategic exploration due to sparse rewards. Hierarchical deep RL systems such as h-DQN \cite{kulkarni2016hierarchical} and FuNs \cite{vezhnevets2017feudal} achieve better performance than DQN in Montezuma's Revenge.

The deep recurrent Q-network (DRQN) \cite{hausknecht2015deep} replaces the fully-connected layer in DQN with a layer of long short-term memory (LSTM) \cite{hochreiter1997long}, which provides the capability to integrate state information through time. DRQN learns to play several partially observable versions of Atari games and achieves results comparable to DQN.

FuNs \cite{vezhnevets2017feudal} is a two-level architecture consisting of a manager and a worker. Both modules are recurrent: the worker uses a standard LSTM; the manager uses a dilated LSTM that extends its state memory. The manager is similar to a meta controller. The distinction is that the goals are directional in FuNs whereas the goals are states in h-DQN.

\section{Background}

\paragraph{Reinforcement Learning.} We consider reinforcement learning for episodic tasks. A decision-making agent interacts with an external environment. The environment is modeled by a finite set of nonterminal states $\mathcal{S}$, a terminal state $\tau$, a finite set of actions $\mathcal{A}$, a transition function $\mathcal{T}:\mathcal{S}\times\mathcal{A} \mapsto \mathcal{S}$, and a reward function $\mathcal{R}:\mathcal{S}^+\mapsto \mathbb{R} $. ($\mathcal{S}^+$ is the Kleene plus operation on in $\mathcal{S}$.) At each time step $t$, the agent observes a state $s_{(t)}\in\mathcal{S}$ and takes an action $a_{(t)}\in\mathcal{A}$. At the next time step, the environment transitions to $s_{(t+1)}$;\footnote{The parentheses in a subscript indicate that the subscript denotes a time step. For example, $s_{(0)}$ is the environment's state at time step 0. Without the parentheses, the subscript denotes a distinct element in a set. This notation applies to states, actions, and goals.} the agent receives a reward $r_{t+1}$. The process repeats until the terminal state is reached. The objective of the agent is to take actions in a way that maximizes the discounted return \cite{sutton2018reinforcement}:
    \begin{equation*}
        G_t = \sum_{i=0}^T\gamma^{i}r_{t+i+1},
    \end{equation*}
where $\gamma$ is the discount rate, $0\leq\gamma\leq1$, and $T$ is the terminal time step.

\paragraph{Context-Sensitive Grammars.} A context sensitive grammar (CSG) is a 4-tuple $(N,\Sigma,P,S)$, where $N$ is a finite set of nonterminal symbols, $\Sigma$ is a finite set of terminal symbols, $S$ is the start symbol, and $P$ is a finite set of production rules. All the rules in $P$ are of the form
\begin{equation*}
\alpha A \beta \to \alpha \gamma \beta,
\end{equation*}
where $A\in N$, $\alpha, \beta \in (\Sigma \cup N)^*$, and $\gamma \in (\Sigma \cup N)^+$.

Given a set of strings $V$, the $i$-th power of $V$ (i.e., the concatenation of $V$ with itself $i$ times) is recursively defined as follows:
\begin{align*}
    V^0 & = \{\lambda\}\\
    V^{i+1} & = \{wv\mid w\in V^i \text{ and } v\in V \}
\end{align*}
where $\lambda$ is the empty string.

\section{Expressiveness Analysis}

When an HF or RHF system interacts with an environment, the meta controller receives states and generates goals, forming a trajectory of alternating states and goals. For example, the string $s_3g_6s_6g_0s_0$ represents the state-goal trajectory in Figure~\ref{fig:corridor}~(a). To analyze the expressiveness of HF and RHF, we define two special types of CSGs that generate strings representing state-goal trajectories. A synopsis of our arguments is as follows:
\begin{enumerate}
    \item We describe how HF and RHF generate goals in an environment. Then we define two types of CSGs: \textit{constrained} and \textit{$k$-recurrent}, to represent HF's and RHF's goal generation respectively.

    \item We prove that a constrained CSG is a special case of a $k$-recurrent CSG. In other words, for any constrained CSG, there exists a $k$-recurrent CSG such that both grammars generate the same strings.
    \item We prove that constrained CSGs cannot generate some string. However, there exists a $k$-recurrent CSG that can generate this string.
\end{enumerate}
Combined, these points imply that RHF is more expressive than HF.

In our analysis, we make the assumption of deterministic policies: the system selects goals and actions deterministically after training. RL algorithms oftentimes use an exploratory stochastic policy. For example, h-DQN uses an $\epsilon$-greedy policy to select goals and actions during training; the best goal or action is selected with a probability of $1-\epsilon$ and a random one is selected with a probability of $\epsilon$. As learning proceeds, $\epsilon$ is gradually reduced to a small value. We want to analyze the system's behavior after it has converged to deterministic policies and hence make this assumption.

\subsection{Expressiveness of HF}

HF is a two-level architecture consisting of a meta controller and a controller. At a time step $t$, the meta controller receives a state $s_{(t)}\in\mathcal{S}$ and selects a goal $g_{(t)}\in\mathcal{G}$, where $\mathcal{G}$ is a finite set of goals. From $t$ to a time step $t+n$ when either $g_{(t)}$ is achieved or the terminal state is reached, the controller receives $s_{(t+i)}$ and $g_{(t)}$, and takes an action $a_{(t+i)}$, where $0\leq i \leq n-1$. At $t+n$, if the terminal state is not reached, the meta controller receives $s_{(t+n)}$ and selects the next goal $g_{(t+n)}$; the process repeats. Regardless of implementation details, under the assumption of deterministic policies, the meta controller defines a function $\mathcal{S} \mapsto \mathcal{G}$.

To capture the expressiveness of HF, we construct a type of CSG encompassing the following components:

\begin{enumerate}[label=(\alph*)]
     \item There can be one or more starting states. For every starting state $s$, we add one rule of the form: $S \to s\langle \text{META}\rangle$. $S$ is the start symbol. $\langle \text{META}\rangle$ is a nonterminal symbol representing the meta controller.

     \item For every state-goal pair $\lbrace s \rbrace \mapsto \lbrace g \rbrace$ defined by the function $\mathcal{S} \mapsto \mathcal{G}$, we add one rule of the form: $s\langle \text{META}\rangle \to sg\langle \text{ACT}\rangle s$. $\langle \text{ACT}\rangle$ is a nonterminal symbol representing the controller. The expression $sg\langle \text{ACT}\rangle$ is used to derive a new state, which is explained in (c).

    \item From $t$ to $t+n$, the controller receives a state-goal $(s_{(t+i)},g_{(t)})$ and takes an action $a_{(t+i)}$, where $0\leq i \leq n-1$. The iteration has three possible outcomes:
     \begin{enumerate}[label=\roman*.]
         \item The controller returns control to the meta controller when $g_{(t)}$ is achieved in $s_{(t+n)}$ and $s_{(t+n)}$ is not the terminal state. This is captured by the rule $sg\langle \text{ACT}\rangle \to sgs'\langle \text{META}\rangle$, where $s$ is the state when the meta controller gives control to the controller (i.e., $s = s_{(t)}$), $g$ is the goal selected by the meta controller (i.e., $g= g_{(t)}$), and $s'$ is the state satisfying $g$ (i.e., $s' = s_{(t+n)}$).
        
         \item The episode terminates when $s_{(t+n)}$ is the terminal state. It is possible that $g$ is achieved in $s_{(t+n)}$. This is captured by the rule $sg\langle \text{ACT}\rangle \to sg\tau$, where $\tau$ is the terminal state (i.e., $\tau = s_{(t+n)}$).

         \item The iteration gets stuck in an infinite loop because the controller never achieves $g$ nor the terminal state is ever reached. This is captured by the rule $sg\langle \text{ACT}\rangle \to sg\langle \text{ACT}\rangle$, which does not yield any string.
     \end{enumerate}

 \end{enumerate}

We formally define this type of CSG as follows.

\begin{definition}
A context-sensitive grammar $(N,\Sigma,P,S)$ is \textit{constrained} if
\begin{enumerate}
    \item The set of nonterminals $N=\{S, \langle \text{META}\rangle, \langle \text{ACT}\rangle\}$.
    \item The set of terminals $\Sigma=\mathcal{S}\cup\mathcal{G}\cup\{\tau\}$. 
    \item The set $P$ contains only production rules of the following forms:
\begin{enumerate}[label=(\alph*)]
    \item For every starting state $s\in\mathcal{S}$, there exists exactly one rule of the form
    \begin{equation*}
      S \to s\langle \text{META}\rangle.
    \end{equation*}
   
    \item For every $s\in\mathcal{S}$, there exists exactly one rule of the form
    \begin{equation*}
         s\langle \text{META}\rangle \to sg\langle \text{ACT}\rangle s,
    \end{equation*}

where $g\in\mathcal{G}$.

    \item For every combination of $s\in\mathcal{S}$ and $g\in\mathcal{G}$, there exists  exactly one rule of one of the following forms:
    
    \begin{enumerate}[label=\roman*.]
        \item $sg\langle \text{ACT}\rangle \to sgs'\langle \text{META}\rangle$, where $s'\in\mathcal{S}$.
        \item $sg\langle \text{ACT}\rangle \to sg\tau$.
        \item $sg\langle \text{ACT}\rangle \to sg\langle \text{ACT}\rangle$.
    \end{enumerate}

\end{enumerate}
\end{enumerate}
\end{definition}


\begin{example}
This example shows a constrained CSG that generates a string representing the state-goal trajectory in Figure \ref{fig:corridor}~(a), which explains how an HF system solves corridor task (i). We define that $g_i$ is achieved in $s_i$ ($0\leq i \leq 6$).

Consider a constrained CSG $G_1=(N,\Sigma,P,S)$. $\Sigma=\{s_i, g_i \mid 0\leq i \leq 6\}$. $s_0$ is the terminal state. Some rules in $P$ are:
\setcounter{equation}{0}
\begin{align}
    S &\to s_3\langle \text{META}\rangle\\
    s_3 \langle \text{META}\rangle &\to s_3g_6 \langle \text{ACT}\rangle s_3\\
    s_6 \langle \text{META}\rangle &\to s_6g_0 \langle \text{ACT}\rangle s_6\\
    s_3g_6 \langle \text{ACT}\rangle &\to s_3g_6s_6  \langle \text{META}\rangle  \\
    s_6g_0 \langle \text{ACT}\rangle &\to s_6g_0s_0
\end{align}
Applying these rules derives
\begin{equation*}
    \begin{split}
        S &\xrightarrow{1} s_3 \langle \text{META}\rangle\\
         &\xrightarrow{2} s_3 g_6\langle \text{ACT}\rangle s_3\\
         &\xrightarrow{4} s_3 g_6 s_6  \langle \text{META}\rangle  s_3\\
         &\xrightarrow{3}  s_3g_6 s_6g_0 \langle \text{ACT}\rangle s_6 s_3\\
         &\xrightarrow{5} s_3g_6 s_6g_0 s_0 s_6 s_3 \\
    \end{split}
\end{equation*}
The substring $s_3g_6 s_6g_0 s_0$ represents the target state-goal trajectory. The substring $s_0 s_6 s_3$ is the states visited in reverse order; it is an artifact of the derivation.
\end{example}

\begin{theorem}
Let $G=(N,\Sigma,P,S)$ be a constrained context-sensitive grammar and $\{s_0,g_a,g_b\}\subseteq\Sigma$. Then $G$ cannot generate a string that contains both $s_0g_a$ and $s_0g_b$ as substrings.
\end{theorem}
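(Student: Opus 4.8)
The plan is to identify the one mechanism that can place a state symbol immediately to the left of a goal symbol in a derivation of $G$, and then to exploit the fact that this mechanism is a function of the state. We may assume $s_0\in\mathcal{S}$ and that $g_a,g_b\in\mathcal{G}$ are distinct (the statement is only meaningful under these readings, and is needed with $g_a\neq g_b$); the remaining cases are degenerate, since, as the structural description below makes clear, two goal symbols are never adjacent in a string generated by $G$, nor is $\tau$ ever immediately followed by a goal.

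First I would prove, by induction on the length of a derivation $S\Rightarrow^{*}\sigma$, a structural invariant: every derivable $\sigma\neq S$ either has the form $\alpha X\beta$ with $X\in\{\langle \text{META}\rangle,\langle \text{ACT}\rangle\}$ occurring exactly once, $\alpha\in(\mathcal{S}\cup\mathcal{G})^{*}$, and $\beta\in\mathcal{S}^{*}$; or, once a production $sg\langle \text{ACT}\rangle\to sg\tau$ has fired, has the form $\alpha\tau\beta$ with $\alpha\in(\mathcal{S}\cup\mathcal{G})^{*}$ and $\beta\in\mathcal{S}^{*}$. The consequence I need afterwards is simply that the symbol immediately following the nonterminal is always a state. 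This induction is routine casework over the five production shapes: $S\to s\langle \text{META}\rangle$ yields $s\langle \text{META}\rangle$; $s\langle \text{META}\rangle\to sg\langle \text{ACT}\rangle s$ turns $s\langle \text{META}\rangle$ into $sg\langle \text{ACT}\rangle s$, preserving the trailing block of states; and the three variants $sg\langle \text{ACT}\rangle\to sgs'\langle \text{META}\rangle$, $sg\langle \text{ACT}\rangle\to sg\tau$, $sg\langle \text{ACT}\rangle\to sg\langle \text{ACT}\rangle$ each leave a block of states to the right.

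The same induction establishes the core claim: if a symbol $s\in\mathcal{S}$ is immediately followed by a symbol $g\in\mathcal{G}$ somewhere in a derivable sentential form, then $s\langle \text{META}\rangle\to sg\langle \text{ACT}\rangle s$ is a production of $G$. In the inductive step, a state-goal adjacency in the new form $\sigma'$ is of one of three kinds. If it lies wholly in the unchanged text, it was already present in $\sigma$ and the hypothesis applies. If it lies wholly inside the right-hand side of the production just applied, then the only state-goal factor among the five right-hand sides is the leading $sg$: for $s\langle \text{META}\rangle\to sg\langle \text{ACT}\rangle s$ this is exactly the adjacency the production introduces, and for the three $sg\langle \text{ACT}\rangle\to\cdots$ variants the factor $sg$ already occurred in the rewritten left-hand side $sg\langle \text{ACT}\rangle$, so the hypothesis applies. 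Finally, if the adjacency straddles the boundary between the unchanged text and the inserted right-hand side, note that every right-hand side begins with a symbol of $\mathcal{S}$ and never with a goal, which excludes a goal at the left edge; and by the structural invariant the nonterminal that was rewritten was followed by a state, so after the rewrite the right-hand side is again followed by a state, which excludes a goal at the right edge. Hence every state-goal adjacency in $\sigma'$ is of the asserted form.

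Granting the core claim, the theorem is immediate: if some string $w$ generated by $G$ contained both $s_0g_a$ and $s_0g_b$ as substrings, the claim would force both $s_0\langle \text{META}\rangle\to s_0g_a\langle \text{ACT}\rangle s_0$ and $s_0\langle \text{META}\rangle\to s_0g_b\langle \text{ACT}\rangle s_0$ to be productions of $G$, whereas the definition of a constrained grammar admits exactly one production with left-hand side $s_0\langle \text{META}\rangle$; hence $g_a=g_b$, contradicting distinctness. I expect the only delicate point to be the boundary case of the core-claim induction, which is precisely where the structural invariant (the nonterminal is always followed by a state) is used; everything else is a finite check over the five rule shapes.
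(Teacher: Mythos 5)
Your proof is correct and takes essentially the same route as the paper's: the decisive fact in both is that the unique form-(b) rule associated with each state is the only mechanism that can place a goal symbol immediately to the right of a state symbol, so substrings $s_0g_a$ and $s_0g_b$ would force two distinct rules with left-hand side $s_0\langle \text{META}\rangle$, contradicting the "exactly one" clause of Definition 1. The only difference is one of rigor: the paper dispatches the key fact with "observe that the only rules that add a $g$ to the right of an $s$ are of form (b)," whereas you prove it by induction on the derivation, using the structural invariant that the nonterminal is always followed by a (possibly empty) block of states to rule out adjacencies created across the rewrite boundary.
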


\begin{proof}
Observe that the only rules that add a $g$ to the right of an $s$ are of form (b) in Definition 1. Assume that $G$ can generate a string that contains both $s_0g_a$ and $s_0g_b$ as substrings. Then $P$ must contain at least the following rules:
\setcounter{equation}{0}
\begin{align}
    s_0 \langle \text{META}\rangle &\to s_0g_a \langle \text{ACT}\rangle s_0\\
    s_0 \langle \text{META}\rangle &\to s_0g_b \langle \text{ACT}\rangle s_0
\end{align}

Since $\{g_a, g_b\}\subseteq\Sigma$, the inequality $g_a \neq g_b$ holds. Since $P$ contains both rules (1) and (2), and there is at most one rule of the form $s_0\langle \text{META}\rangle \to s_0g\langle \text{ACT}\rangle s_0$ in $P$, the equality $g_a = g_b$ must hold, which contradicts the fact that $g_a \neq g_b$. Therefore Theorem 1 holds.

\end{proof}

Theorem 1 implies that constrained CSGs cannot generate a string that contains $s_3g_6s_6g_5s_5g_6s_6g_0s_0$ as a substring. This substring represents the state-goal trajectory in Figure \ref{fig:corridor} (b). This implies that HF systems cannot generate this trajectory.


\subsection{Expressiveness of RHF}
RHF is a two-level architecture consisting of a meta controller and a controller. RHF differs from HF in that the RHF meta controller receives as input a sequence of states instead a single state. Observe that the meta controller has a finite memory: the meta controller can recall a sequence of states of up to a certain length $k$. We define 
\begin{equation*}
\mathcal{S}^{\leq k}= \bigcup_{0\leq i\leq k}\mathcal{S}^i,
\end{equation*}
where $\mathcal{S}$ is the set of nonterminal states and $k$ is a nonnegative integer. $\tilde{s}\in\mathcal{S}^{\leq k}$ represents a chronologically ordered sequence of states observed by the meta controller before the current time step. At a time step $t$, the meta controller receives $s_{(t)}\tilde{s}$ and selects a goal $g\in\mathcal{G}$; then $s_{(t)}$ is appended to $\tilde{s}$.  From $t$ to a time step $t+n$ when either $g_{(t)}$ is achieved or the terminal state is reached, the controller receives $s_{(t+i)}$ and $g_{(t)}$, and takes an action $a_{(t+i)}$, where $0\leq i \leq n-1$. At $t+n$, if the terminal state is not reached, the meta controller receives $s_{(t+n)}\tilde{s}$ and selects the next goal $g_{(t+n)}$; then $s_{(t+n)}$ is appended to $\tilde{s}$; the process repeats. Therefore, regardless of implementation details, under the assumption of deterministic policies, the meta controller defines a function $\mathcal{S}^{\leq k+1} \mapsto \mathcal{G}$.

We construct a type of CSG as follows to capture the expressiveness of RHF and provide a formal definition afterward.

 \begin{enumerate}[label=(\alph*)]
     \item There can be one or more starting states. For every starting state $s$, we add one rule of the form: $S \to s\langle \text{META}\rangle$. $S$ is the start symbol. $\langle \text{META}\rangle$ is a nonterminal symbol representing the meta controller.

    \item For every sequence-goal pair $\{ s\tilde{s} \} \mapsto \{ g \}$ defined by the function $\mathcal{S}^{\leq k+1} \mapsto \mathcal{G}$, we add one rule of the form: $s\langle \text{META}\rangle\tilde{s} \to sg\langle \text{ACT}\rangle s\tilde{s}$. $\langle \text{ACT}\rangle$ is a nonterminal symbol representing the controller. The expression $sg\langle \text{ACT}\rangle $ is used to derive a new state, which is explained in (c).

     \item The controller receives a state-goal $(s,g)$ and behaves the same way as in HF. Three outcomes are possible:
     \begin{enumerate}[label=\roman*.]
         \item The controller returns control to the meta controller when $g$ is achieved in a state $s'$ and $s'$ is not the terminal state. This is captured by the rule $sg\langle \text{ACT}\rangle \to sgs'\langle \text{META}\rangle$.
        
        \item The episode terminates when the terminal state $\tau$ is reached. It is possible that $g$ is achieved in $\tau$. This is captured by the rule $sg\langle \text{ACT}\rangle \to sg\tau$.

        \item The iteration gets stuck in an infinite loop because the controller never achieves $g$ nor the terminal state is ever reached. This is captured by the rule $sg\langle \text{ACT}\rangle \to sg\langle \text{ACT}\rangle$.
     \end{enumerate}

 \end{enumerate}

\begin{definition}
A context-sensitive grammar $(N,\Sigma,P,S)$ is \textit{$k$-recurrent} ($k$ is a nonnegative integer) if
\begin{enumerate}
    \item  The set of nonterminals $N=\{S, \langle \text{META}\rangle, \langle \text{ACT}\rangle\}$.
    \item The set of terminals $\Sigma=\mathcal{S}\cup\mathcal{G}\cup\{\tau\}$.
    \item The set $P$ contains only production rules of the following forms:
\begin{enumerate}[label=(\alph*)]
    \item  For every starting state $s\in\mathcal{S}$, there exists exactly one rule of the form
    \begin{equation*}
     S\to s\langle \text{META}\rangle.
\end{equation*}
    \item For every combination of $s\in\mathcal{S}$ and $\tilde{s}\in\mathcal{S}^{\leq k}$, there exists zero or exactly one rule of the form
\begin{equation*}
     s\langle \text{META}\rangle\tilde{s} \to sg\langle \text{ACT}\rangle s\tilde{s},
\end{equation*}
where $g\in\mathcal{G}$.

   \item For every combination of $s\in\mathcal{S}$ and $g\in\mathcal{G}$, there exists  exactly one rule of one of the following forms:
    
    \begin{enumerate}[label=\roman*.]
        \item $sg\langle \text{ACT}\rangle \to sgs'\langle \text{META}\rangle$, where $s'\in\mathcal{S}$.
        \item $sg\langle \text{ACT}\rangle \to sg\tau$.
        \item $sg\langle \text{ACT}\rangle \to sg\langle \text{ACT}\rangle$.
    \end{enumerate}
\end{enumerate}
\end{enumerate}
\end{definition}


\begin{example}
This example shows a $k$-recurrent CSG that generates a string representing the state-goal trajectory in Figure~\ref{fig:corridor}~(b), which explains how an RHF system solves corridor task (ii). We define that $g_i$ is achieved in $s_i$ ($0\leq i \leq 6$).

Consider a 2-recurrent CSG $G_2=(N,\Sigma,P,S)$. $\Sigma=\{s_i, g_i \mid 0\leq i \leq 6\}$. $s_0$ is the terminal state. Some rules in $P$ are:
\setcounter{equation}{0}
\begin{align}
    S &\to s_3\langle \text{META}\rangle\\
    s_3 \langle \text{META}\rangle &\to s_3g_6 \langle \text{ACT}\rangle s_3\\
    s_6 \langle \text{META}\rangle s_3 &\to s_6g_5 \langle \text{ACT}\rangle s_6s_3\\
    s_5  \langle \text{META}\rangle s_6s_3 &\to s_5g_6 \langle \text{ACT}\rangle s_5 s_6s_3 \\
    s_6  \langle \text{META}\rangle s_5s_6s_3 &\to s_6s_0 \langle \text{ACT}\rangle s_6 s_5 s_6s_3 \\
    s_6g_5 \langle \text{ACT}\rangle &\to s_6g_5s_5 \langle \text{META}\rangle  \\
    s_3g_6 \langle \text{ACT}\rangle &\to s_3g_6s_6 \langle \text{META}\rangle \\
    s_5g_6 \langle \text{ACT}\rangle &\to s_5g_6s_6  \langle \text{META}\rangle \\
    s_6g_0 \langle \text{ACT}\rangle & \to s_6 g_0 s_0
\end{align}
Applying these rules derives
\begin{equation*}
    \begin{split}
        S &\xrightarrow{1} s_3\langle \text{META}\rangle \\
         &\xrightarrow{2} s_3g_6 \langle \text{ACT}\rangle s_3\\
         &\xrightarrow{7} s_3g_6s_6 \langle \text{META}\rangle s_3 \\
         &\xrightarrow{3} s_3g_6s_6g_5 \langle \text{ACT}\rangle s_6s_3\\
         &\xrightarrow{6}  s_3g_6 s_6g_5s_5 \langle \text{META}\rangle  s_6s_3  \\
        &\xrightarrow{4} s_3g_6 s_6g_5s_5g_6 \langle \text{ACT}\rangle s_5 s_6s_3 \\
       & \xrightarrow{8}  s_3g_6 s_6g_5s_5g_6s_6  \langle \text{META}\rangle s_5 s_6s_3\\
       & \xrightarrow{5} s_3g_6 s_6g_5s_5g_6s_6g_0 \langle \text{ACT}\rangle s_6 s_5 s_6s_3\\
        & \xrightarrow{9}s_3g_6 s_6g_5s_5g_6s_6g_0 s_0 s_6 s_5 s_6s_3
    \end{split}
\end{equation*}
The substring $s_3g_6s_6g_5s_5g_6s_6g_0s_0$ represents the target state-goal trajectory. The substring $s_0 s_6 s_5 s_6s_3$ is the states visited in reverse order; it is an artifact of the derivation.

\end{example}

\begin{prop}
Any constrained context-sensitive grammar is 0-recurrent.
\end{prop}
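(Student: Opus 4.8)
The plan is to show that when $k=0$, Definition~2 collapses to something that every constrained grammar already satisfies, so the proof is a direct definitional comparison rather than a construction. First I would note that the two definitions already agree verbatim on the nonterminal set ($N=\{S,\langle\text{META}\rangle,\langle\text{ACT}\rangle\}$), on the terminal set ($\Sigma=\mathcal{S}\cup\mathcal{G}\cup\{\tau\}$), on rule form (a) (start rules $S\to s\langle\text{META}\rangle$, one per starting state), and on rule form (c) (the three $\langle\text{ACT}\rangle$-elimination forms i--iii). So the only thing to check is rule form (b).

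The key observation is that $\mathcal{S}^{\leq 0}=\bigcup_{0\le i\le 0}\mathcal{S}^i=\mathcal{S}^0=\{\lambda\}$, so in a $0$-recurrent grammar the only admissible value of $\tilde{s}$ in form (b) is the empty string $\lambda$. Substituting $\tilde{s}=\lambda$, the $0$-recurrent form (b) rule $s\langle\text{META}\rangle\tilde{s}\to sg\langle\text{ACT}\rangle s\tilde{s}$ becomes $s\langle\text{META}\rangle\to sg\langle\text{ACT}\rangle s$, which is exactly the constrained form (b) rule. Thus, quantifying over $s\in\mathcal{S}$ and $\tilde{s}\in\mathcal{S}^{\leq 0}$ is the same as quantifying over $s\in\mathcal{S}$ alone.

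It then remains to reconcile the multiplicity clauses: Definition~1 requires, for each $s\in\mathcal{S}$, \emph{exactly one} form-(b) rule, whereas Definition~2 requires, for each pair $(s,\tilde{s})$, \emph{zero or exactly one}. Since ``exactly one'' implies ``zero or exactly one,'' every constrained grammar meets the weaker requirement of Definition~2 with $k=0$. Putting the four comparisons together, any grammar satisfying Definition~1 satisfies Definition~2 with $k=0$, i.e., is $0$-recurrent.

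There is essentially no hard step here; the proof is a routine unwinding of the definitions. The only point needing a moment's care is handling the degenerate sequence $\tilde{s}=\lambda$ correctly (so that $s\langle\text{META}\rangle\lambda$ and $sg\langle\text{ACT}\rangle s\lambda$ are read as $s\langle\text{META}\rangle$ and $sg\langle\text{ACT}\rangle s$), and noticing that the asymmetry in the multiplicity clauses goes in the favorable direction.
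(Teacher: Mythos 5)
Your proof is correct and follows the same route as the paper, which simply asserts that Definition~1 is the $k=0$ special case of Definition~2; you have usefully spelled out the two details the paper leaves implicit, namely that $\mathcal{S}^{\leq 0}=\{\lambda\}$ collapses form (b) to the constrained version and that ``exactly one'' implies ``zero or exactly one.''
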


\begin{proof}
All constrained CSGs satisfy the conditions of Definition 2. Definition 1 is a special case of Definition 2 where $k=0$. Therefore Proposition 1 holds.

\end{proof}

Constrained CSGs and $k$-recurrent CSGs capture the expressiveness of HF and RHF respectively. Proposition 1 implies that for any constrained CSG, there exists a $k$-recurrent CSG such that both grammars generate the same strings. Theorem 1 implies that constrained CSGs cannot generate the string $s_3g_6 s_6g_5s_5g_6s_6g_0 s_0 s_6 s_5 s_6s_3$, which contains both $s_6g_5$ and $s_6g_0$. Example 2 shows that there exists a $k$-recurrent CSG that can generate this string. Therefore RHF is more expressive than HF.








\section{Implementation}

We describe an implementation of RHF named Rh\nobreakdash-REINFORCE, which is used in our experiments. Figure~\ref{fig:arc} shows the architecture of Rh-REINFORCE.

The meta controller is a policy approximator that uses an RNN. The input to the meta controller is a variable-length sequence of vectors, each of which represents a state observed by the meta controller at a time step. When the vector dimension is high (e.g., image pixels), the input is passed to two convolutional layers with rectifiers. The output of the convolutional layers is passed to a layer of  gated recurrent units (GRU) \cite{cho2014learning}. The output of the GRU layer is passed to a fully-connected softmax output layer  that has a number of units equal to $\lvert \mathcal{G} \rvert$. When the vector dimension is low (e.g., the corridor task), the input is directly passed to a GRU layer followed by an output layer. The last vector in the output sequence, which corresponds to the most recent time step, is used as a probability distribution to select a goal.

The controller is an actor-critic model \cite{konda2000actor} that uses two feedforward neural networks. The actor has two fully-connected or convolutional layers (depending on the input dimension) with rectifiers, followed by a fully-connected softmax output layer. The critic has the same network structure as the actor except that the output layer uses a linear activation instead of softmax. The input to the controller is a state extended by a vector representing the goal selected by the meta controller.

\begin{figure*}
\centering
\begin{tikzpicture}
\node at (0,0) {\includegraphics[width=.5\textwidth]{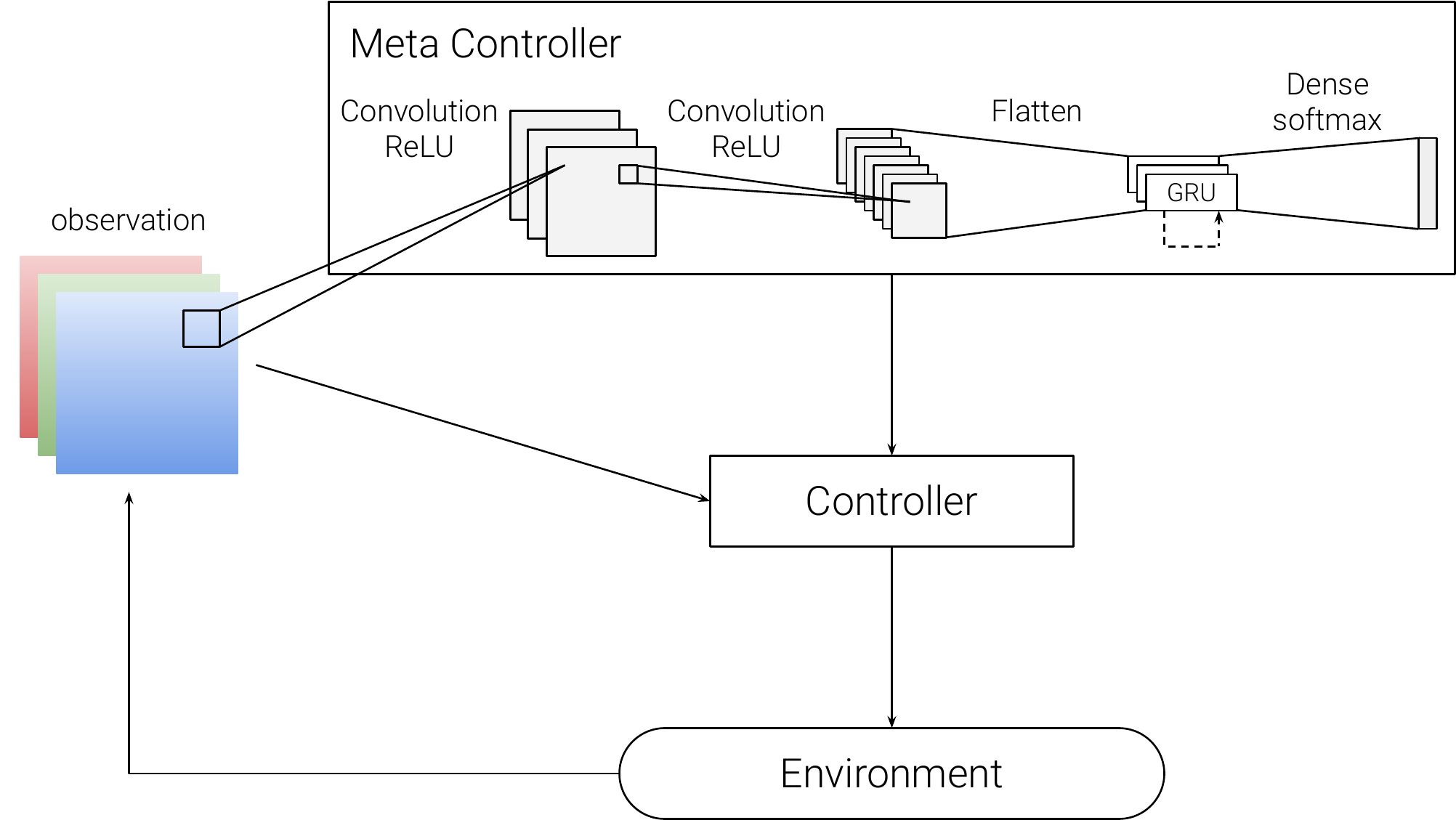}};
\draw node [right] at (1.2,0.3){\small$g\sim\pi(\cdot|\tilde{s};\theta)$};
\draw node [right] at (1.2,-1.3){\small$a\sim\pi_a(\cdot|s_a,g;\theta_a)$};
\end{tikzpicture}
\caption{The architecture of Rh-REINFORCE for high dimensional input vectors. When the input dimension is low, the meta controller does not use convolutional layers. The controller has a similar network architecture to the meta controller without a recurrent layer.}\label{fig:arc}
\end{figure*}

\paragraph{Learning.} (Algorithm~\ref{alg}) The meta controller uses REINFORCE \cite{williams1992simple} to learn a parameterized policy $\pi(g|\tilde{s};\theta)$. (In this context $\tilde{s}$ is actually $s\tilde{s}$ in the expressive analysis.) The performance measure is the expected return $\mathbb{E}[G_t]$. The parameters $\theta$ is optimized by gradient ascent on $\mathbb{E}[G_t]$ in the direction
\setcounter{equation}{0}
\begin{equation}\label{eq:theta}
G_t\nabla\ln{\pi(g_{(t)}|\tilde{s}_{(t)};\theta)}
\end{equation}

The controller uses a one-step actor-critic method  \cite{konda2000actor} to learn a policy $\pi_a(a|s,g;\theta_a)$, and a value function $v(s,g;\theta_v)$. Compared to REINFORCE, this method uses the one-step return and the value function as a baseline to reduce variance. The parameters $\theta_a$ is optimized using the gradient
\begin{equation}\label{eq:theta_a}
\delta \nabla\ln{\pi_a(a_{(t)}|s_{(t)},g;\theta_a)},
\end{equation}
where $\delta = i_t + \gamma v(s_{(t+1)},g;\theta_v) - v(s_{(t)},g;\theta_v)$ is the temporal difference error. ($i_t$ is an intrinsic reward.)

The loss for the value function is $L=\mathbb{E}[\delta^2]$. The parameters $\theta_v$ is optimized by gradient descent on $L$ in the direction
\begin{equation}\label{eq:theta_v}
\delta\nabla v(s_{(t)},g;\theta_v)
\end{equation}

In an episode, the meta controller selects a goal $g$ (line 7). Until $g$ is achieved or the terminal state is reached, the controller takes actions (line 11); after each action, $\theta_a$ and $\theta_v$ are updated using gradients (\ref{eq:theta_a}) and (\ref{eq:theta_v}) (line 15). After $g$ is achieved, $(s,g,R)$ is saved to construct a trajectory (line 17). The process repeats until the episode terminates. At the end of the episode, the trajectory is used to compute $G_t$ in gradient (\ref{eq:theta}); then $\theta$ is updated using backpropagation through time for the length of the episode (line 21).

\begin{algorithm}[tb]
\caption{Rh-REINFORCE learning}
\label{alg}
Initialize meta controller policy $\pi(g|\tilde{s};\theta)$\\
Initialize controller policy $\pi_a(a|s,g;\theta_a)$\\
Initialize controller value function $v(s,g;\theta_v)$
\begin{algorithmic}[1] 
\For{$episode=1$ to $n$}
    \State $trajectory\gets[~]$
    \State $\tilde{s}\gets[~]$
    \State Initialize the environment and get state $s$
    \State Append $s$ to $\tilde{s}$
    \While {$s$ is not terminal}
        \State {Select goal $g\sim\pi(\cdot|\tilde{s};\theta)$}
        \State $R\gets0$
        \State $s_a\gets s$
            \While {\textbf{not} ($s_a$ is terminal \textbf{or} $g$ reached)}
                \State Take action $a\sim\pi_a(\cdot|s_a,g;\theta_a)$
                \State Get $s'$, external reward $r$, intrinsic reward $i$
                \State $R\gets R+r$
                \State $s_a\gets s'$
                \State Update $\theta_a$, $\theta_v$
            \EndWhile
            \State Append $(s,g,R)$ to $trajectory$
            \State $s\gets s_a$
            \State Append $s$ to $\tilde{s}$
    \EndWhile
    \State Update $\theta$
\EndFor
\end{algorithmic}
\end{algorithm}

\section{Experiments}

To demonstrate that RHF is more expressive than HF, we compare Rh-REINFORCE with two HF systems: h-REINFORCE (substituting a fully-connected layer for the GRU layer in Rh-REINFORCE) and h-DQN. The systems are trained in three variants of the corridor environment and a grid environment. In each environment, the systems use an optimal controller, which takes the actions that result in the shortest path from the current state to a state that satisfies a given goal (or the most likely actions if the environment is stochastic).  Therefore, any difference in performance between the systems is due to their meta controllers.

\subsection{Environments}

\paragraph{Corridor.} (Figure~\ref{fig:corridor} (b)) The environment is corridor task (ii). It has 7 states from left to right: $s_0, s_1, \dots, s_6$. $s_0$ is the terminal state (gray). The actions are left move and right move. The agent starts in $s_3$ (circled); at each time step, it can move to the state immediate to its left or right. (Taking a right move in $s_6$ has no effect.) To constitute a visit to $s_6$ (asterisk), the agent must move from $s_5$ to $s_6$. The agent receives a reward of $+1$ if it visits $s_6$ at least twice and then reaches $s_0$; otherwise, the reward is $+0.01$. Actions have no rewards. An episode ends after 20 time steps; the agent receives a reward of $0$ if it fails to reach $s_0$.

\paragraph{Stochastic Corridor.} This a modified version of Corridor based on the environment in \cite{kulkarni2016hierarchical}. It has the same state configuration and reward function as Corridor. The difference it that when the agent takes a right move, with a probability of 0.5, it ends up in the state to its right, and with a probability of 0.5, it ends up in the state to its left. An episode ends after 20 time steps resulting in a reward of $0$ (the same as Corridor).

\paragraph{Doom Corridor.} This is Corridor set in a Doom map created using the ViZDoom API \cite{Kempka2016ViZDoom}. It has the same reward function as Corridor. The agent always faces right and observes $320\times240\times3$ RGB game frames. Each input frame is resized to $100\times100\times3$ before passed to a convolutional layer.

\paragraph{Grid.} (Figure~\ref{fig:grid}) This is a navigation task in a $5\times5$ gridworld. The terminal state (gray) is above the first row. The landmark states are numbered 1, 2, and 3. The agent starts at the top left state (circled). To obtain a reward of $+1$, the agent must visit landmark 1, return to the circled state, then visit landmark 2, return to the circled state, then visit landmark 3, return to the circled state, and finally reach the terminal state. The visits can be intertwined, for example, $0~1~0~1~2~1~0~3~0~\tau$ still results in a reward of $+1$. In all other cases the reward is $0$. An episode ends after 60 actions. The task cannot be solved by HF because it requires a state-goal trajectory that contains at least $s_0g_1$ and $s_0g_\tau$, which as per Theorem 1 cannot be generated by HF.

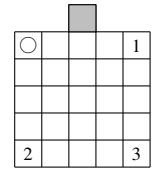
\begin{figure}
\centering
\begin{tikzpicture}[scale=0.36]
    \foreach \i in {0,...,5} {
        \draw  (\i,0) -- (\i,5);
    }
    \foreach \i in {0,...,5} {
        \draw  (0,\i) -- (5,\i);
    }
    \draw (2,5) -- (2,6);
    \draw (3,5) -- (3,6);
    \draw (2,6) -- (3,6);
    \draw node at (0.5,4.5) {\fontsize{9}{9}\Circle};
    \draw node at(4.5,4.5){\scriptsize1};
    \draw node at(0.5,0.5){\scriptsize2};
    \draw node at(4.5,0.5){\scriptsize3};
    \draw [fill=lightgray] (2, 5) rectangle (3, 6);
\end{tikzpicture}
\caption{The grid environment. The starting state is circled. The terminal state is gray. The landmarks are numbered 1, 2, and 3.}\label{fig:grid}
\end{figure}

\subsection{Training}
In each environment, all three systems use the same fixed controller. Hence only the meta controller's parameters are updated.

Rh-REINFORCE and h-REINFORCE use Algorithm~\ref{alg} to learn softmax policies. (Intrinsic rewards and line 15  are not applicable because only the meta controller is being trained.) In Corridor and Doom Corridor, the training includes a random exploration phase for the initial 1000 episodes, where the meta controller selects random goals. This phase is not used in the other two environments.

h-DQN uses a one-step Q-learning algorithm with experience replay: the replay size is 100000; the batch size is 64; the target network update rate is 0.001. The goal selection is $\epsilon$-greedy. $\epsilon$ decays from 1 to 0.01 over 15000 steps. Other hyperparameters are summarized in Table~\ref{tab}.

\begin{table*}[h!]
  \begin{center}
    \caption{Meta controller hyperparameters. All three systems use two convolution layers followed by a GRU/Dense layer in Doom Corridor. Rh-REINFORCE and h-REINFORCE use the same output activation, loss, and optimizer.  h-REINFORCE and h-DQN have the same network architectures except the output activation, loss, and optimizer. The same learning rate is used across the systems.}
    \label{tab}
    \begin{tabular}{cc|c|c}
     \toprule 
    & Rh-REINFORCE & h-REINFORCE  & h-DQN \\
      \midrule
  Corridor/Stochastic  & \multirow{2}{*}{GRU: 64 units}  & \multicolumn{2}{c}{Dense 1: 16 units [ReLU]} \\ 
  Grid  &   & \multicolumn{2}{c}{{Dense 2: 32 units [ReLU]}}\\
     \midrule
     \multirow{3}{*}{Doom Corridor} & \multicolumn{3}{c}{Conv 1: 32 filters, (8, 8) kernel, strides=4 [ReLU]}  \\ 
     & \multicolumn{3}{c}{Conv 2: 64 filters, (4, 4) kernel, strides=2 [ReLU]}   \\ 
    & GRU: 256 units &\multicolumn{2}{c}{Dense: 256 units [ReLU]} \\
         \midrule
         Output activation   & \multicolumn{2}{c|}{softmax} & linear\\
        Loss   & \multicolumn{2}{c|}{categorical crossentropy} & Huber\\
        Optimizer   & \multicolumn{2}{c|}{Adam} & RMSprop\\
        Learning rate   & \multicolumn{3}{c}{0.001}\\
      \bottomrule
    \end{tabular}
  \end{center}
\end{table*}

\subsection{Results}

Rh-REINFORCE outperforms h-REINFORCE and h-DQN in all the environments. Figure~\ref{fig:exp} shows the average episodic returns of 10 runs in each environment.

In Corridor, Rh-REINFORCE learns an optimal policy in 2000 episodes. h-REINFORCE and h-DQN achieve an average return of 0.013 and 0.116 respectively after 10000 episodes.

In Stochastic Corridor, Rh-REINFORCE, h-REINFORCE, and h-DQN achieve an average return of 0.416, 0.071, and 0.054, respectively. For comparison, a handcrafted optimal policy has an average return of 0.423; a random policy has an average return of 0.031.

In Doom Corridor, Rh-REINFORCE, h-REINFORCE, and h-DQN achieve an average return of 0.835, 0.029, and 0.006, respectively.

In Grid, Rh-REINFORCE learns an optimal policy in 14000 episodes. h-DQN and h-REINFORCE are unable find an optimal policy after 20000 episodes.

\begin{figure*}[!htbp]
    \centering
    \begin{subfigure}{0.33\textwidth}
        \centering
        \includegraphics[width=\textwidth]{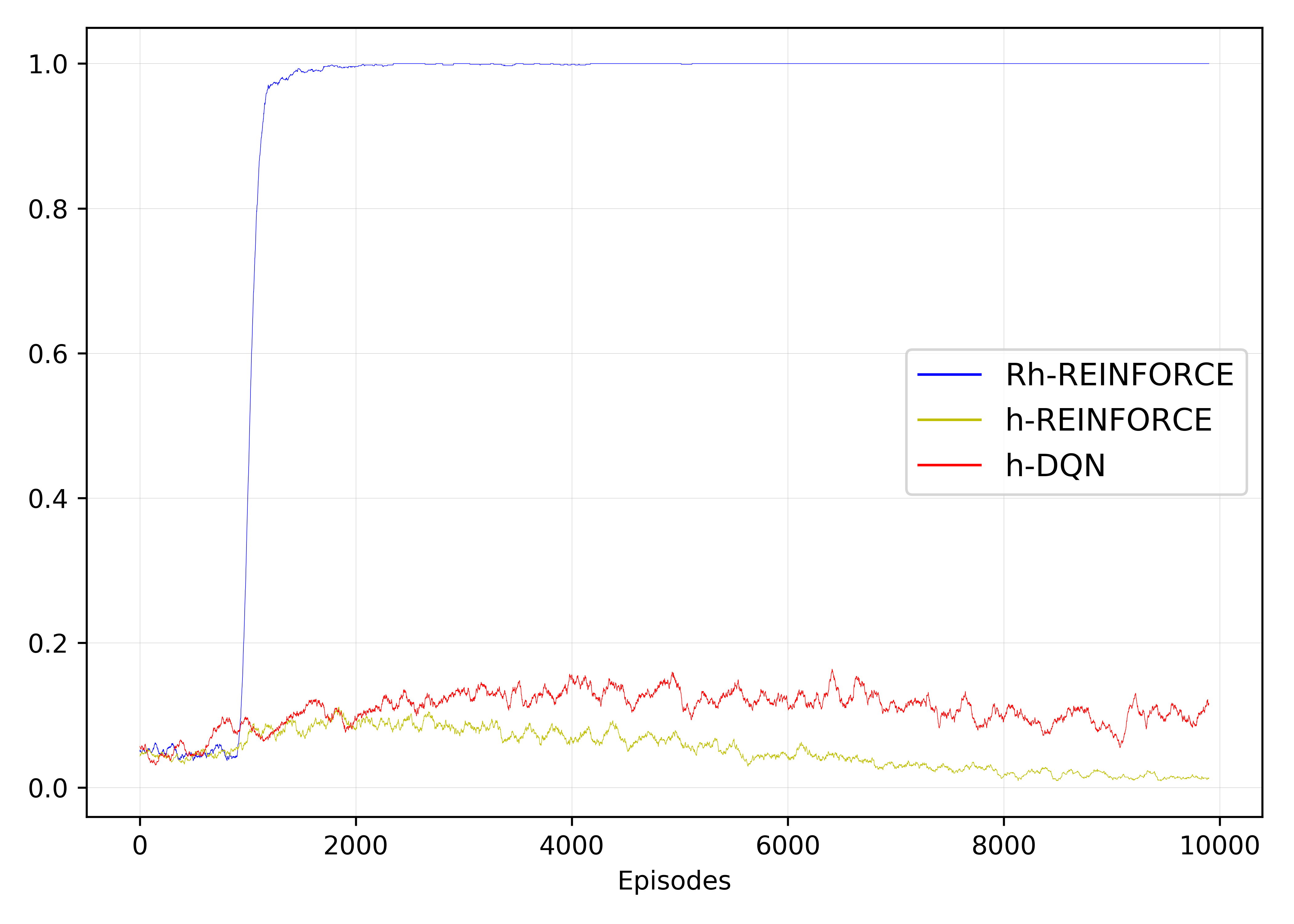}
        \caption{Corridor}
    \end{subfigure}
    \hspace{2em}
    \begin{subfigure}{0.33\textwidth}
        \centering
        \includegraphics[width=\textwidth]{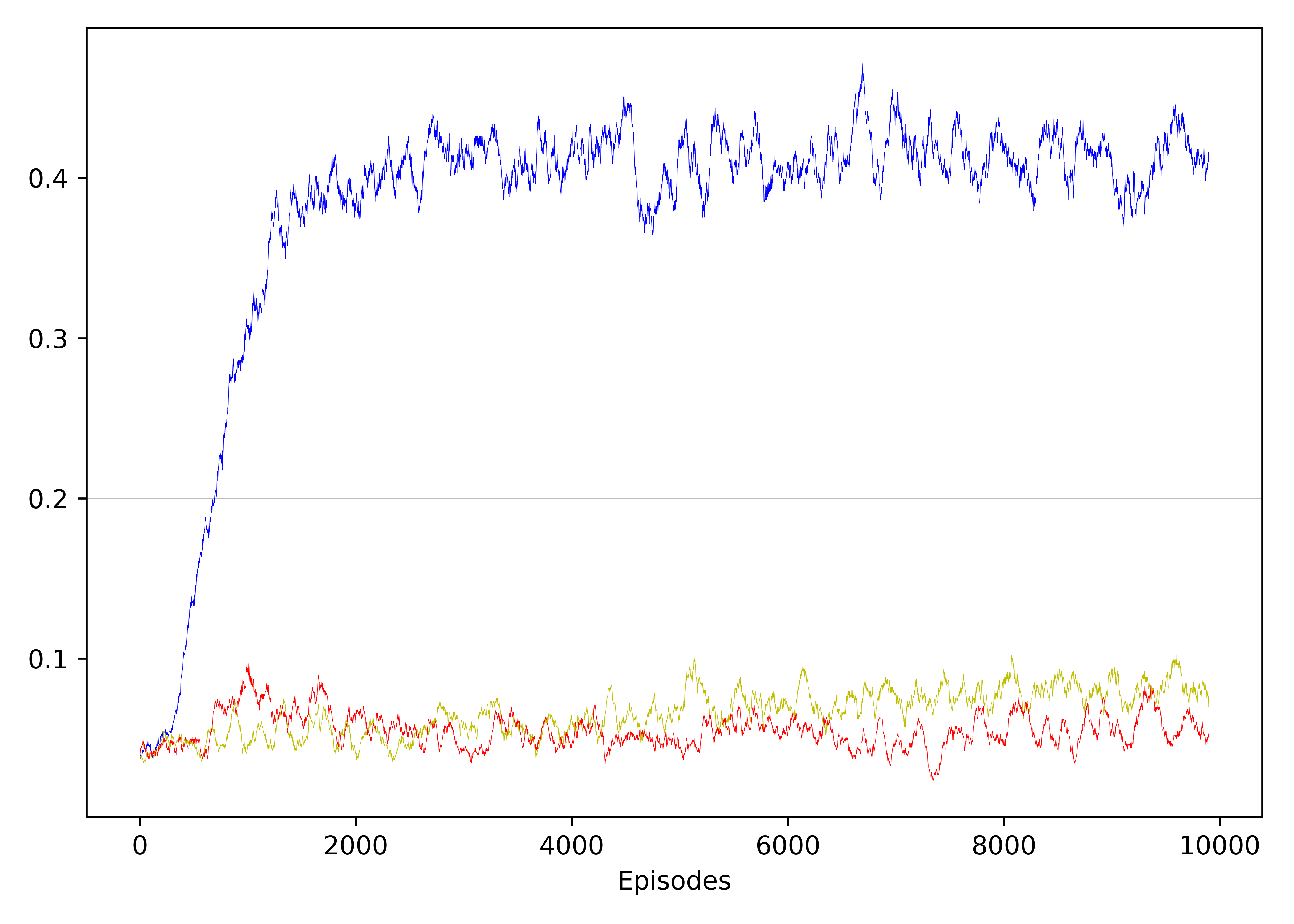}
        \caption{Stochastic Corridor}
    \end{subfigure}
    \\[1em]
    \begin{subfigure}{0.33\textwidth}
        \centering
        \includegraphics[width=\textwidth]{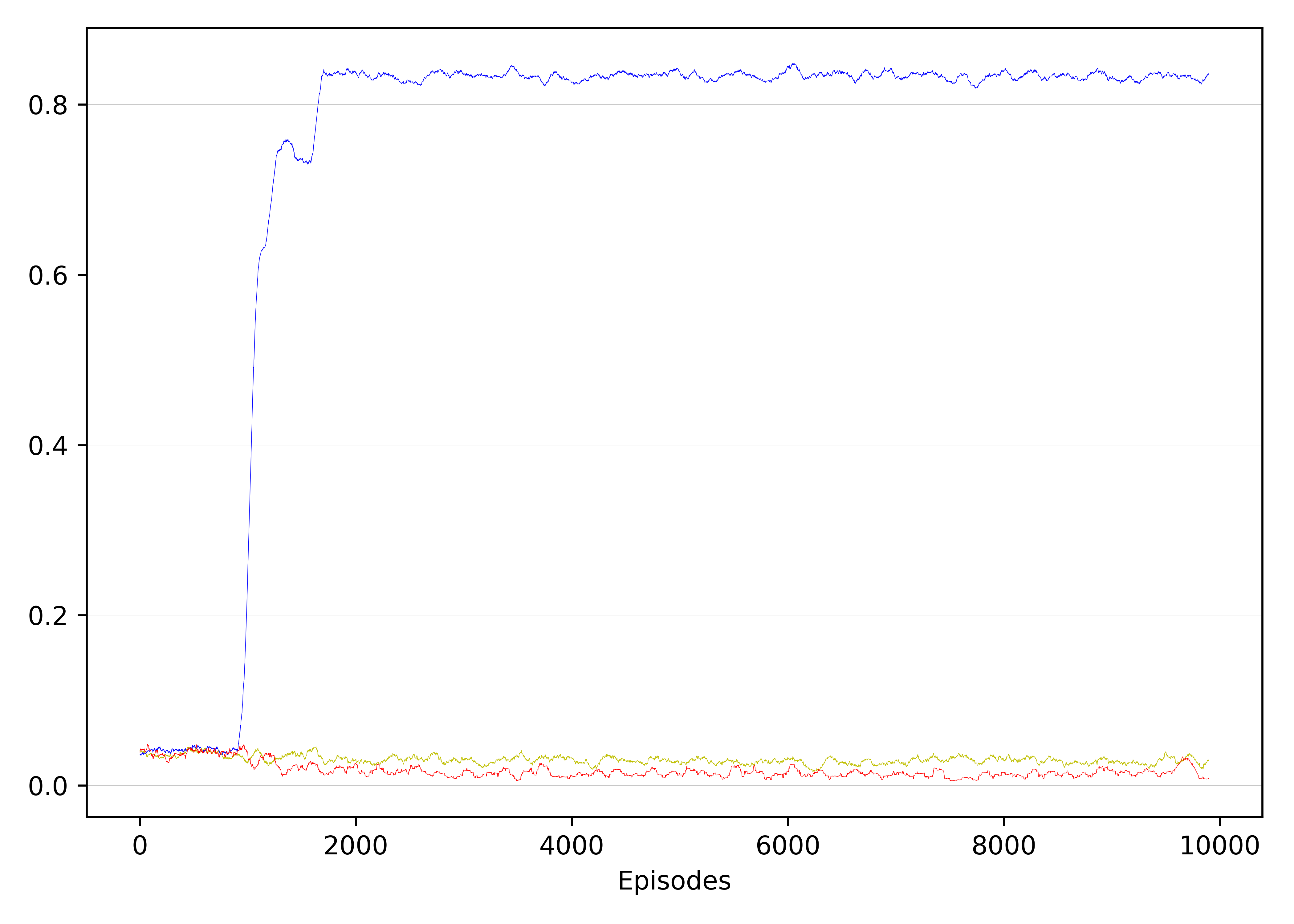}
        \caption{Doom Corridor}
    \end{subfigure}
    \hspace{2em}
    \begin{subfigure}{0.33\textwidth}
        \centering
        \includegraphics[width=\textwidth]{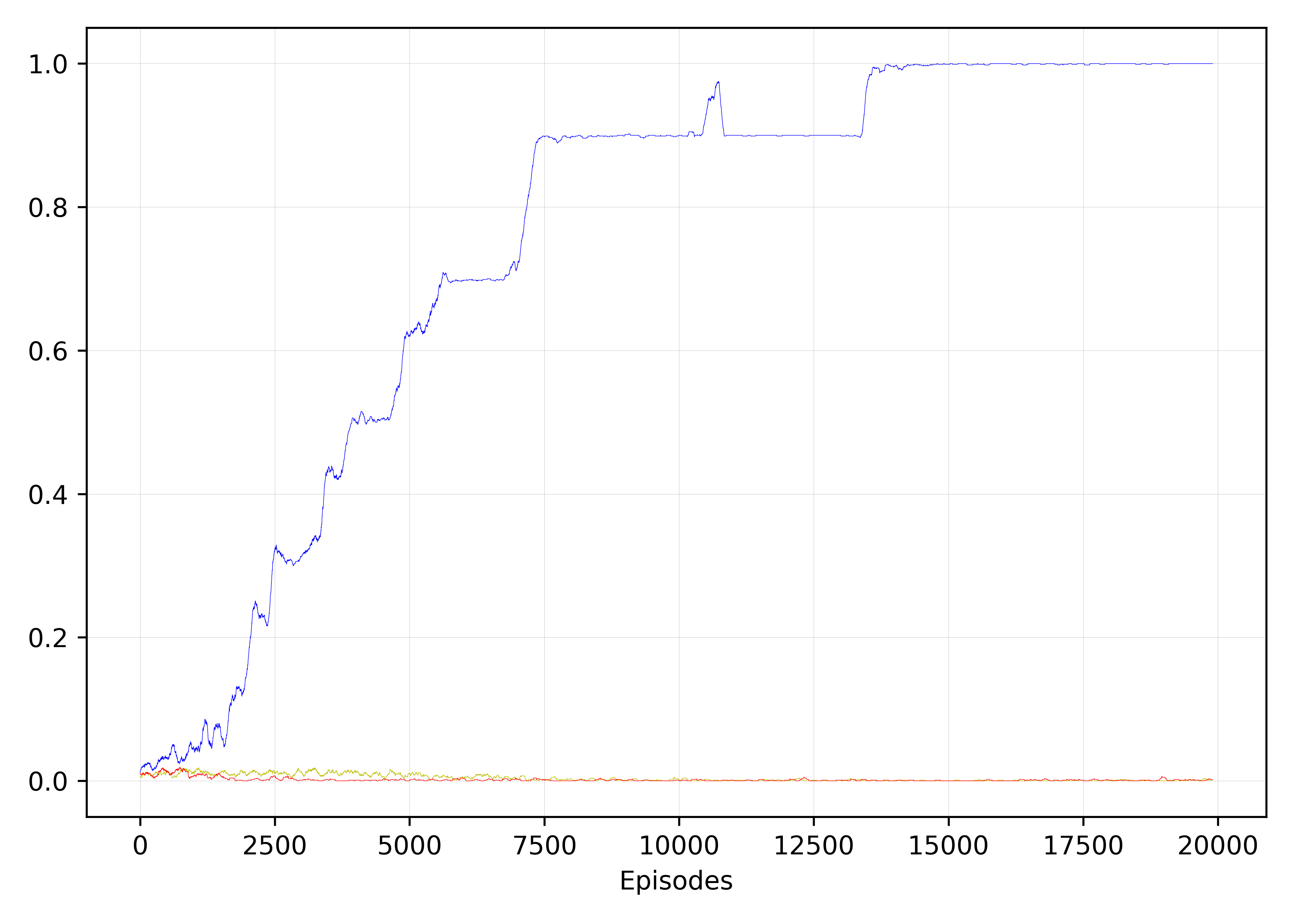}
        \caption{Grid}
    \end{subfigure}
    \caption{Comparison of Rh-REINFORCE (blue), h-REINFORCE (yellow), and h-DQN (red). Episodic returns are plotted against training episodes. The data points are the average of 10 runs smoothed by a 100-episode moving average. In each environment, all three systems use the same controller. Thus the plots illustrate the difference in meta controller performance.}
    \label{fig:exp}
\end{figure*}

\section{Concluding Remarks}
The experimental results are consistent with our expressiveness analysis. In the deterministic environments, the two HF systems are unable to generate the state-goal trajectories that result in the maximum reward; in contrast, the RHF system is capable of learning to generate the trajectories and thus obtaining a much higher reward. Since the policies (i.e., softmax and $\epsilon$-greedy) during training are not deterministic, it is possible for the HF systems to obtain a high reward by chance, but they are unable to do so consistently. The same conclusion holds in the stochastic environment.

We claim neither that an HF or RHF system can optimally solve an arbitrary RL task nor that they perform better than other deep RL systems. It is possible for an RL system without a hierarchical architecture to achieve a high performance in a variety of tasks. Our analysis focuses on the expressiveness of HF and RHF, i.e., the kinds of state-goal trajectories that the frameworks can and cannot generate.

For future work, we want to investigate the expressiveness of architectures that have additional recurrent levels on top of the meta level. We conjecture that a single recurrent level with a sufficient number of units can simulate any number of recurrent levels and thus adding more levels will not increase a system's expressive power. 

\section*{Acknowledgements} This research was supported by the Office of Naval Research grants N00014-18-1-2009 and  N68335-18-C-4027 and the National Science Foundation grant 1909879. The views, opinions, and findings expressed are those of the authors and should not be interpreted as representing the official views or policies of the Department of Defense or the U.S. Government.

\bibliographystyle{kr}
\bibliography{kr}

\end{document}